\newtheorem{thm}{Theorem}
\newtheorem{lem}{Lemma}
\DeclareRobustCommand\onedot{\futurelet\@let@token\@onedot}
\def\@onedot{\ifx\@let@token.\else.\null\fi\xspace}
\def\eg{{e.g}\onedot} 
\def\ie{{i.e}\onedot}
\newcommand{\method}{AOL\xspace}
\newcommand{\aolconv}{\method Conv}
\newcommand{\aolfc}{\method FC}
\newcommand{\mm}{MaxMin}
\newcommand{\cra}{certified robust accuracy\xspace}
\newcommand{\Cra}{Certified robust accuracy\xspace}
\newcommand{\supp}{appendix\xspace}
\newcommand{\myparagraph}[1]{\medskip\noindent\textit{#1}}
\newcommand{\tableheader}{Certified Robust Accuracy}
\newcommand{\fulltableheader}{ % 
Method      & Standard  & \multicolumn{4}{c}{\tableheader} \\ % 
            & Accuracy  & $\epsilon=\frac{36}{255}$ & $\epsilon=\frac{72}{255}$ & $\epsilon=\frac{108}{255}$ & $\epsilon=1$ \\% 
}
\newcommand{\tabledesc}{ %
We report the standard accuracy on the test set as well as the \cra under input perturbations %
up to size $\epsilon$ for different values of $\epsilon$. % 
}
\newcommand{\atimes}{}  % {\times} OR {}
\newcommand{\spec}[1]{\|#1\|_{\text{spec}}}
\title{Almost-Orthogonal Layers for Efficient General-Purpose Lipschitz Networks
%%%% Cite as
%%%% Update your official citation here when published 
% \thanks{\textit{\underline{Citation}}: 
% \textbf{Authors. Title. Pages.... DOI:000000/11111.}} 
}
\author{
  Bernd Prach, Christoph H. Lampert \\
  Institute of Science and Technology Austria (ISTA) \\
  \texttt{\{bprach,chl\}@ist.ac.at} \\
}
\begin{document}
\maketitle

\begin{abstract}
It is a highly desirable property for deep networks to be robust against small input changes. 
One popular way to achieve this property is by designing networks with a small Lipschitz constant. 
In this work, we propose a new technique for constructing such \emph{Lipschitz networks} that has a number of desirable properties: it can be applied to any linear network layer (fully-connected or convolutional), it provides formal guarantees on the Lipschitz constant, it is easy to implement and efficient to run, and it can be combined with any training objective and optimization method. 
In fact, our technique is the first one in the literature that achieves all of these properties simultaneously.

Our main contribution is a rescaling-based weight matrix parametrization that guarantees each network layer to have a Lipschitz constant of at most $1$ and results in the learned weight matrices to be close to orthogonal.
Hence we call such layers \emph{almost-orthogonal Lipschitz (\method)}.

Experiments and ablation studies in the context of 
image classification with \cra confirm that 
\method layers achieve results that are on par with most existing methods. 
Yet, they are simpler to implement 
and more broadly applicable, because they do not 
require computationally expensive matrix 
orthogonalization or inversion steps as 
part of the network architecture.

We provide code at 
\url{https://github.com/berndprach/AOL}.
%
% \end{abstract}
%
\keywords{Lipschitz networks, orthogonality, robustness}
\end{abstract}

% start broad, narrow down, show problem, provide contribution
% MODEL:
% 1) establish importance of the field, provide background,
%    define terminology (including title words if necessary),
%    present problem of the area/research focus
% 2) mention previous or current research, state of the art
% 3) locate a gap in the research, describe the 
%    problem you address, present a prediction/contribution
% 4) describe the paper, methodology, announce findings

\section{Introduction}
% 1) show importance (short, as the community knows this)
Deep networks are often the undisputed state of the art when it comes to solving computer vision tasks with high accuracy.  
%
% 1) first problem
However, the resulting systems tend to be not very \emph{robust}, \eg, against small changes in the input data.
This makes them untrustworthy for safety-critical high-stakes tasks, such as autonomous driving or medical diagnosis.

A typical example of this phenomenon are \emph{adversarial examples}~\cite{Szegedy_2014_ICLR}: 
imperceptibly small changes to an image can drastically change the outputs of a deep learning 
classifier when chosen in an adversarial way. 
Since their discovery, numerous methods were developed to make networks more 
robust against adversarial examples. However, in response a comparable number of new attack forms were found, leading to an ongoing cat-and-mouse game. For surveys on the state of research, see, \eg, \cite{chakraborty2018adversarial,serban2020adversarial,xu2020adversarial}.

A more principled alternative is to create deep networks that are 
robust by design, for example, by restricting the class of functions
they can represent. 
Specifically, if one can ensure that a network has a small 
\emph{Lipschitz constant}, then one knows that small changes 
to the input data will not result in large changes to the output, 
even if the changes are chosen adversarially. 

A number of methods for designing such \emph{Lipschitz networks} have been proposed in the literature, which we discuss in  Section~\ref{sec:relatedwork}. 
However, all of them have individual limitations.
In this work, we introduce the \method 
(for \emph{almost-orthogonal Lipschitz}) method.
It is the first method for constructing 
Lipschitz networks that simultaneously meets 
all of the following desirable criteria:

\medskip\noindent\textbf{Generality.} 
    \method is applicable to a wide range of network architectures, in particular 
    most kinds of fully-connected and convolutional layers. 
    In contrast, many recent methods work only 
    for a restricted set of layer types, such 
    as only fully-connected layers or only 
    convolutional layers with non-overlapping receptive fields.

\medskip\noindent\textbf{Formal guarantees.} \method provably guarantees a Lipschitz constant $1$. This is in contrast to methods that only encourage small Lipschitz constants, \eg, by regularization.

\medskip\noindent\textbf{Efficiency.} \method causes only a small computational overhead at training time and none at all at prediction time. This is in contrast to methods that embed expensive iterative operations such as matrix orthogonalization or inversion steps into the network layers.

\medskip\noindent\textbf{Modularity.} \method can be treated as a black-box module and combined with arbitrary training objective functions and optimizers.
This is in contrast to methods that achieve the Lipschitz property only when combined with, \eg, specific loss-rescaling or projection steps 
during training.

\medskip\method's name stems from the fact that the weight matrices it learns are approximately orthogonal. 
In contrast to prior work, this property is not 
enforced explicitly, which would incur a 
computational cost. Instead, almost-orthogonal 
weight matrices emerge organically during network training. 
The reason is that \method's rescaling step relies 
on an upper bound to the Lipschitz constant that 
is tight for parameter matrices with orthogonal 
columns. %, but generally not for others.
During training, matrices without that property 
are put at the disadvantage of resulting in outputs of smaller dynamic range. As a consequence, orthogonal matrices are able to achieve smaller values of the loss and are therefore preferred by the optimizer.

\section{Notation and Background}\label{sec:notation}
A function $f:\mathbb{R}^n\to\mathbb{R}^{m}$ is 
called \emph{$L$-Lipschitz continuous} with respect to 
norms $\|.\|_{\mathbb{R}^n}$ and  $\|.\|_{\mathbb{R}^m}$, if it fulfills
\begin{align}
    \|f(x)-f(y)\|_{\mathbb{R}^m} \leq L\|x-y\|_{\mathbb{R}^n},
\end{align}
for all $x$ and $y$,
where $L$ is called the \emph{Lipschitz constant}. 
In this work we only consider Lipschitz-continuity with respect to the Euclidean norm, $\|.\|_2$, and mainly for $L=1$. For conciseness of notation, we refer to such 1-Lipschitz continuous functions simply as \emph{Lipschitz functions}.

For any linear (actually affine) function $f$, the Lipschitz property can be verified by checking if the function's Jacobian matrix, $J_f$, has \emph{spectral norm} $\|J_f\|_{\text{spec}}$ less or equal to $1$, where 
\begin{align}
    [J_f]_{ij}=\frac{\partial f_i}{\partial x_j}
    \qquad\text{and}\qquad
    \|M\|_{\text{spec}} = \max_{\|v\|_2=1} \|Mv\|_2.
\end{align}
The spectral norm of a matrix $M$ can in fact be computed numerically as it is 
identical to the largest singular value of the matrix. 
This, however, typically requires iterative algorithms that are computationally expensive in high-dimensional settings.
An exception is if $M$ is an orthogonal matrix, \ie $M^\top M=I$, for $I$ the identity matrix. 
In that case we know that all its singular values are $1$ and $\|Mv\|_2=\|v\|_2$ for all $v$, so the corresponding linear transformation is Lipschitz. 

Throughout this work we consider a deep neural network as a concatenation of linear layers (fully-connected or convolutional) alternating with non-linear activation functions. 
We then study the problem how to ensure that the resulting network function is Lipschitz. 

It is known that computing the exact Lipschitz constant of a neural network is an NP-hard problem~\cite{virmaux2018lipschitz}.
However, upper bounds can be computed more efficiently, 
\eg, by multiplying the individual Lipschitz constants of 
all layers.

\section{Related work}\label{sec:relatedwork}
The first attempts to train deep networks with small 
Lipschitz constant used ad-hoc techniques, such as weight clipping~\cite{arjovsky2017wasserstein} or 
regularizing either the network gradients~\cite{gulrajani2017improved}
or the individual layers' spectral norms~\cite{miyato2018spectral}.
However, these techniques do not formally guarantee 
bounds on the Lipschitz constant of the trained 
network. 
Formal guarantees are provided by constructions 
that ensure that each individual network layer 
is Lipschitz. 
Combined with Lipschitz activation functions, such as
ReLU, MaxMin or $\tanh$, this ensures that the overall 
network function is Lipschitz.

In the following, we discuss a number of prior methods for obtaining Lipschitz networks. 
A structured overview of their properties can be found in Table~\ref{table:methods}.

\newcommand{\Y}{\textcolor{green}{\ding{51}}}
\newcommand{\N}{\textcolor{red}{\ding{55}}}
\newcommand{\M}[1]{$\sim^{#1}$}

\begin{table}[t]
\setlength{\tabcolsep}{4pt}
\begin{center}
\caption{Overview of the properties of different methods for learning Lipschitz networks. 
Columns indicate: \emph{E (efficiency)}: no internal iterative procedure required, scales well with the input size. 
%\\
\emph{F (formal guarantees)}: provides a guarantee about the Lipschitz constant of the trained network.
\emph{G (generality)}: can be applied to fully-connected as well as convolutional layers. 
\emph{M (modularity)}: can be used with any training objective and optimization method. 
\\
$\sim$ symbols indicate that a property is partially fulfilled. Superscripts provide further explanations:
$^1$ requires a regularization loss.
$^2$ internal methods would have to be run to convergence.
$^3$ iterative procedure that can be split between training steps.
$^4$ requires matrix orthogonalization.
$^5$ requires inversion of an input-sized matrix.
$^6$ requires circular padding and full-image kernel size. 
$^7$ requires large kernel sizes to ensure orthogonality.
}
\label{table:methods}
\begin{tabular}
{l|p{.05\textwidth}p{.05\textwidth}p{.05\textwidth}p{.05\textwidth}p{.3\textwidth}}
\hline\noalign{\smallskip}
Method  & E & F & G & M & Methodology \\
\noalign{\smallskip}\hline\noalign{\smallskip}
WGAN~\cite{arjovsky2017wasserstein}        & \Y    & \Y  & \Y  &\N   & \mbox{weight clipping}\\
WGAN-GP~\cite{gulrajani2017improved}  & \Y    &\N  &\Y   & \M{1}  & regularization\\
Parseval Networks~\cite{Cisse_2017_ICML}    & \Y    & \N  & \Y    &\M{1}& regularization \\
OCNN \cite{Wang_2020_CVPR}                  & \Y    & \N  & \Y    &\M{1}& regularization \\
SN~\cite{miyato2018spectral}     &\N& \M{2}  & \Y  &\Y& parameter rescaling \\
LCC~\cite{Gouk_2021_ML}              & \N    & \M{2}  & \Y & \N  & \mbox{parameter rescaling} \\
LMT~\cite{Tsuzuku_2018_NIPS}                & \M{3} & \N  & \Y    & \N  & \mbox{loss rescaling} \\
GloRo~\cite{Leino_2021_ICML}                & \M{3} &\M{2}& \Y    & \N  & \mbox{loss rescaling} \\
BCOP~\cite{Li_2019_NIPS}                    & \N$^4$& \Y  & \Y    & \Y  & explicit orthogonalization \\
GroupSort\cite{Anil_2019_ICML}            & \N$^4$&\M{2}& \N    & \Y  & explicit orthogonalization \\
ONI~\cite{Huang_2020_CVPR}                  & \N    &\M{2}& \Y    & \Y  & explicit orthogonalization \\
Cayley Convs~\cite{Trockman_2021_ICLR}      & \M{5} & \Y  & \M{6} & \Y  & explicit orthogonalization \\
SOP~\cite{Singla_2021_ICML}                 & \N    &\M{2}& \M{7} & \Y  & explicit orthogonalization \\
ECO~\cite{Yu_2022_ICLR}                     & \N$^4$& \Y  & \M{6} & \Y  & explicit orthogonalization \\
AOL (proposed)                              & \Y    & \Y  & \Y    & \Y  & parameter rescaling\\
\hline
\end{tabular}
\end{center}
\setlength{\tabcolsep}{1.4pt}
\end{table}

% \subsection{Lipschitz Methods}
%   \subsubsection{ABC}

% \subsubsection{Bound-based methods.}
\subsection{Bound-based methods}
The Lipschitz property of a network layer could be achieved trivially: one simply computes the layer's Lipschitz constant, or an upper bound, and divides the layer weights by that value.
Applying such a step after training, however, does not lead to satisfactory results in practice, because the dynamic range of the network outputs is reduced by the product of the scale factors. This can be seen as a reduction of network capacity that prevents the network from fitting the training data well. 
Instead, it makes sense to incorporate the Lipschitz condition already at training time, such that the optimization can attempt to find weight matrices that lead to a network that is not only Lipschitz but also able to fit the data well.

The \emph{Lipschitz Constant Constraint (LCC)} method~\cite{Gouk_2021_ML} identifies all weight matrices with spectral norm above a threshold 
$\lambda$ after each weight update and rescales those matrices to have a spectral norm of exactly $\lambda$.
\textit{Lipschitz Margin Training (LMT)}~\cite{Tsuzuku_2018_NIPS} 
and \textit{Globally-Robust Neural Networks (GloRo)}~\cite{Leino_2021_ICML} 
approximate the overall Lipschitz constant from numeric estimates of the largest singular values of the layers' weight matrices.
They integrate this value as a scale factor into their respective loss functions.

In the context of deep learning, controlling only the Lipschitz constant of each layer separately has some drawbacks.
In particular, the product of the individual Lipschitz 
constants might grossly overestimate the network's 
actual Lipschitz constant. 
The reason is that the Lipschitz constant of a layer is determined by a single vector direction of maximal expansion.
When concatenating multiple layers, their directions of maximal expansion will typically
not be aligned, especially with in between nonlinear activations.
As a consequence, the actual maximal amount of expansion will be smaller than the product of the per-layer maximal expansions.
This causes the variance of the activations 
to shrink during the forward pass through the network, even though in principle a sequence of 1-Lipschitz operations could perfectly preserve it. 
Analogously, the magnitude of the gradient signal  shrinks with each layer during the backwards pass of backpropagation training, which can lead to vanishing gradient problems.

% \subsubsection{Orthogonality-based method.}
\subsection{Orthogonality-based method}
%
% A way to address the problems of variance-loss and 
% vanishing gradients is to exploit \emph{orthogonality},
% which has been found useful 
% in computer vision and machine learning~\cite{saxe14exact,Cogswell_2016_ICLR,Bank_2020_BMVC}. 
% %
% Specifically one uses network layers that 
% encode \emph{orthogonal} linear operations.
%
A way to address the problems of variance-loss and 
vanishing gradients is to use network layers that 
encode \emph{orthogonal} linear operations.
These are 1-Lipschitz, so the overall network will also have that property. However, they are also \emph{isotropic}, in the sense that they preserve data variance and gradient magnitude in all directions, not just a single one. 

For fully-connected layers, it suffices to ensure that the weight matrices themselves are orthogonal.
The \emph{GroupSort}~\cite{Anil_2019_ICML} architecture
achieves this using classic results from numeric analysis~\cite{Bjorck_1971_SIAM}. The authors parameterize an orthogonal weight matrix as a 
specific matrix power series, which they embed 
in truncated form into the network architecture. 
\textit{Orthogonalization by Newton's Iterations (ONI)}~\cite{Huang_2020_CVPR} parameterizes 
orthogonal weight matrices as $\left(V V^\top \right)^{-1/2} V$ for a general parameter matrix $V$. 
As an approximate representation of the inverse operation the authors embed a number of steps of Newton's method into the network. 
Both methods, GroupSort and ONI, have the shortcoming that their orthogonalization schemes require the application of iterative computation schemes which incur a trade-off between the approximation quality and the computational cost.

For convolutional layers, more involved constructions
are required to ensure that the resulting linear 
transformations are orthogonal. 
In particular, enforcing orthogonal kernel matrices is not sufficient in general to ensure a Lipschitz constant of 1 when the convolutions have overlapping receptive fields.

\textit{Skew Orthogonal Convolutions (SOC)}~\cite{Singla_2021_ICML} parameterize orthogonal matrices as the matrix exponentials of skew-symmetric matrices. 
They embed a truncation of the exponential's power  series into the network and bound the resulting error. 
However, SOC requires a rather large number of iterations to yield good approximation quality, which leads to high computational cost.

\textit{Block Convolutional Orthogonal Parameterization (BCOP)}~\cite{Li_2019_NIPS} relies on a matrix decomposition approach to address the problem of orthogonalizing convolutional layers.
The authors parameterize each convolution kernel of size $k \times k$ by a set of $2k-1$ convolutional matrices of size $1\times 2$ or $2\times 1$. These are combined with % by 
a final pointwise convolution with orthogonal kernel.
However, BCOP also incurs high computation cost, because each of the smaller transforms requires orthogonalizing a corresponding parameter matrix. 

\emph{Cayley Layers} parameterize orthogonal matrices using the Cayley transform~\cite{Cayley_1846_JRAM}. 
Naively, this requires the inversion of a matrix of 
size quadratic in the input dimensions.
However, in~\cite{Trockman_2021_ICLR} the author demonstrate that in certain situations, namely for full image size convolutions with circular padding, the computations can be performed more efficiently. 

%In concurrent work to ours, 
\textit{Explicitly Constructed Orthogonal Convolutions} (ECO)~\cite{Yu_2022_ICLR} rely on a theorem that 
relates the singular values of the Jacobian of a
circular convolution to the singular values of a 
set of much smaller matrices~\cite{Sedghi_2019_ICLR}. 
The authors derive a rather efficient parameterization that, however, 
is restricted to full-size dilated convolutions with non-overlapping receptive fields. 

The main shortcomings of Cayley layers and ECO are their restriction to certain full-size convolutions. Those are incompatible with most well-performing network architectures for high-dimensional data, which use
local kernel convolutions, such as $3\times 3$, and overlapping receptive fields.

% \subsubsection{Relation to \method.}
\subsection{Relation to \method}
The \method method that we detail in Section~\ref{sec:method} can be seen as a hybrid of bound-based and orthogonality-based approaches. 
It mathematically guarantees the Lipschitz property of each network layer by rescaling the corresponding parameter matrix (column-wise for fully-connected layers, 
channel-wise for convolutions). 
In contrast to other bound-based approaches it does not use a computationally expensive iterative approach to estimate the Lipschitz constant as precisely as possible, but it relies on 
a closed-form upper bound. 
The bound is tight for matrices with orthogonal columns.
During training this has the effect that orthogonal parameter matrices are implicitly preferred by the optimizer, because they allow fitting the data, and therefore minimizing the loss, the best. 
Consequently, \method benefits from the advantages of 
orthogonality-based approaches, such as preserving the 
variance of the activations and the gradient magnitude, 
without the other 
methods' shortcomings of requiring difficult parameterizations and being restricted to specific layer types. 

\section{Almost-Orthogonal Lipschitz (AOL) Layers}~\label{sec:method}
In this section, we introduce our main contribution, almost-orthogonal Lipschitz (AOL) layers, which combine the advantages of rescaling and orthogonalization approaches. 
Specifically, we introduce a weight-dependent rescaling technique for the weights of a linear neural network layer that guarantees the layer to be 1-Lipschitz. 
It can be easily computed in closed form and is applicable to fully-connected as well as convolutional layers.

The main ingredient is the following theorem, which 
provides an elementary formula for controlling the 
spectral norm of a matrix by rescaling its 
columns.

\begin{thm} \label{thm}
    For any matrix $P \in \mathbb{R}^{n \times m}$, define $D \in \mathbb{R}^{m \times m}$
    as the diagonal matrix with $D_{ii} = \left( \sum_j \left| P^\top P \right|_{ij} \right)^{-1/2}$ if the expression in the brackets is non-zero, or $D_{ii}=0$ otherwise. 
    Then the spectral norm of $PD$ is bounded by $1$.
\end{thm}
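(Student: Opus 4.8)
The plan is to reduce the spectral-norm bound to a quadratic-form inequality and then control the off-diagonal terms by a Gershgorin-style argument matched exactly to the chosen normalization. Since $\spec{PD}^2$ equals the largest eigenvalue of $(PD)^\top(PD) = D\,P^\top P\,D$, it suffices to show that $\euc{PDv}^2 \le \euc{v}^2$ for every $v \in \mathbb{R}^m$. I would introduce the symmetric positive-semidefinite matrix $M = P^\top P$ and the rescaled vector $w = Dv$, so that the target becomes $w^\top M w \le \euc{v}^2$.

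The crux is to bound the quadratic form $w^\top M w = \sum_{i,j} w_i M_{ij} w_j$ by first discarding signs and then applying the elementary inequality $|w_i||w_j| \le \tfrac12(w_i^2 + w_j^2)$:
\begin{align}
    w^\top M w \le \sum_{i,j} |M_{ij}|\,|w_i|\,|w_j| \le \tfrac12\sum_{i,j}|M_{ij}|\,(w_i^2+w_j^2).
\end{align}
Because $M=P^\top P$ is symmetric we have $|M_{ij}| = |M_{ji}|$, so the two halves of the last sum coincide after relabeling $i \leftrightarrow j$, and the whole expression collapses to $\sum_i w_i^2 \sum_j |M_{ij}|$. This is exactly where the definition of $D$ pays off: by construction $\sum_j |M_{ij}| = \sum_j |P^\top P|_{ij} = D_{ii}^{-2}$ whenever $D_{ii}\neq 0$, and substituting $w_i = D_{ii}v_i$ turns each summand $w_i^2\sum_j|M_{ij}|$ into precisely $v_i^2$. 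Summing over $i$ then yields $w^\top M w \le \euc{v}^2$, as required.

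I expect the main obstacle to be twofold: first, \emph{recognizing} the symmetrization step, namely that pairing $|w_i||w_j|$ against the row-sums $\sum_j |M_{ij}|$ is exactly what makes the bound degenerate to $\euc{v}^2$ under the prescribed diagonal $D$ — this is the insight that ties the abstract spectral bound to the concrete formula for $D_{ii}$. Second, the degenerate case $D_{ii}=0$ needs care: for such an index the $i$-th column of $PD$ vanishes, so $w_i=0$ regardless of $v_i$, and the term contributes nothing to $w^\top M w$ while still adding a nonnegative $v_i^2$ to $\euc{v}^2$. I would therefore run the substitution on the support $\{i : D_{ii}\neq 0\}$ and observe that every cross term involving a zero index already vanishes, so the symmetrization and row-sum identity apply consistently over the full index set and the inequality only becomes slacker on the complement.
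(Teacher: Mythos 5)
Your proof is correct and follows essentially the same route as the paper's: reduce to the quadratic form $\vec{v}^\top D\, P^\top P\, D\, \vec{v}$, apply the symmetrization bound $w^\top M w \le \sum_i \bigl(\sum_j |M_{ij}|\bigr) w_i^2$ via $2|w_i||w_j| \le w_i^2 + w_j^2$, and let the definition of $D$ collapse each surviving term to $v_i^2$. Your explicit handling of the degenerate indices with $D_{ii}=0$ is in fact slightly more careful than the paper's, which absorbs that case silently into its final inequality.
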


\begin{proof}
The upper bound of the spectral norm of $PD$ follows from an elementary computation. By definition of the spectral norm, we have
\begin{align}
    \spec{PD}^2
    %&= \max_{\vec{v} \ne \vec{0}} \frac{\| PD \vec{v} \|_2^2}{\| \vec{v} \|_2^2}
    &= \max_{\|\vec{v}\|_2 =1} \| PD \vec{v} \|_2^2
    = \max_{\|\vec{v}\|_2 =1} \vec{v}^\top\!D^\top\! P^\top\!P D \vec{v}.
    \label{eq:spectralnorm}
\end{align}
We observe that for any symmetric matrix $M\in\mathbb{R}^{n\times n}$ and any $w\in\mathbb{R}^n$:
\begin{align}%= \sum_{i,j=1}^n w_i M_{ij} w_j
   \vec{w}^T M \vec{w}
    &\leq \sum_{i,j=1}^n |M_{ij}| |w_i||w_j|
    \leq \sum_{i,j=1}^n \frac{1}{2} |M_{ij}| (w_i^2 + w_j^2) = \sum_{i=1}^n \Big(\sum_{j=1}^n |M_{ij}|\Big) w_i^2 \label{eq:upper_bound}
\end{align}
where the second inequality follows from the general relation $2ab\leq a^2+b^2$. 
Evaluating \eqref{eq:upper_bound} for $M=P^\top\!P$ and $\vec{w}=D\vec{v}$, we obtain for all $\vec{v}$ with $\|\vec{v}\|_2=1$\!\!\!
\begin{align}
 \vec{v}^\top\!D^\top\! P^\top\!P D \,\vec{v}
 &\leq \sum_{i=1}^n \Big(\sum_{j=1}^n |P^\top P|_{ij}\Big)(D_{ii} v_i)^2
 \leq \sum_{i=1}^n v_i^2 = 1
 \label{eq:spectral2}
\end{align}
which proves the bound.
\end{proof}

Note that when $P$ has orthogonal columns of full rank, we have 
that $P^\top P$ is diagonal and $D=(P^\top P)^{-1/2}$, so 
$D^\top P^\top P D=I$, for $I$ the identity matrix. 
Consequently, \eqref{eq:spectral2} 
holds with equality and the bound in Theorem~\ref{thm} is tight.

\medskip
In the rest of this section, we demonstrate how Theorem~\ref{thm} allows us to control 
the Lipschitz constant of any linear layer in a neural network. 

\subsection{Fully-Connected Lipschitz Layers}
We first discuss the case of fully-connected layers.
\begin{lem}[Fully-Connected \method Layers] \label{lem:fullyconnected}
Let $P\in \mathbb{R}^{n\times m}$ be an arbitrary parameter matrix. 
Then, the fully-connected network layer 
\begin{align}
f(x) = W x + b\label{eq:linearlayer}
\end{align}
is guaranteed to be 1-Lipschitz, when $W = PD$ for $D$ defined as in Theorem~\ref{thm}.
\end{lem}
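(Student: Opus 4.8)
The plan is to reduce the claim directly to Theorem~\ref{thm} via the characterization of Lipschitz continuity for affine maps recalled in Section~\ref{sec:notation}. First I would note that for the affine layer $f(x) = Wx + b$, subtracting two evaluations cancels the bias, so $f(x) - f(y) = W(x - y)$. The Jacobian is therefore the constant matrix $J_f = W$, and the spectral-norm criterion for affine functions gives $\|f(x) - f(y)\|_2 = \|W(x-y)\|_2 \le \spec{W}\,\|x-y\|_2$ for all $x,y$.

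Next I would substitute $W = PD$ with $D$ defined exactly as in Theorem~\ref{thm} and invoke that theorem to conclude $\spec{PD} \le 1$. Combining this with the preceding inequality yields $\|f(x)-f(y)\|_2 \le \|x-y\|_2$, which is precisely the statement that $f$ is 1-Lipschitz with respect to the Euclidean norm. Since Theorem~\ref{thm} already folds the degenerate (zero-column) case into the definition of $D$, the conclusion holds for an arbitrary parameter matrix $P$ with no extra assumptions.

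There is no serious obstacle here: the lemma is essentially a one-line corollary of Theorem~\ref{thm}. The only point worth stating explicitly is that the bias $b$ is irrelevant to the Lipschitz constant, because it cancels in the difference $f(x)-f(y)$, so the whole argument rests on bounding $\spec{W}$, which the theorem already supplies. All the genuine mathematical content resides in the column-rescaling construction of $D$ and its spectral-norm bound; the lemma merely repackages that bound as a statement about fully-connected layers.
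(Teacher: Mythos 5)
Your proposal is correct and follows essentially the same route as the paper's own proof: both reduce the lemma to Theorem~\ref{thm} by observing that the Jacobian of the affine map $f(x)=Wx+b$ is simply $W$, so the bound $\spec{PD}\le 1$ immediately gives the 1-Lipschitz property. Your version merely spells out the cancellation of the bias and the inequality $\|W(x-y)\|_2 \le \spec{W}\|x-y\|_2$, which the paper leaves implicit.
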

\begin{proof}
The Lemma follows from Theorem~\ref{thm},
because the Lipschitz constant of $f$ is bounded by the spectral norm of its Jacobian matrix, which is simply $W$.
\end{proof}

\paragraph{Discussion.} 
Despite its simplicity, there are a number aspects of Lemma 1 that are worth a closer look.
First, we observe that a layer of the form $f(x)=PDx+b$ can be interpreted in two ways, depending on how we (mentally) put brackets into the linear term.
In the form $f(x)=P(Dx)+b$, we apply an arbitrary weight matrix to a suitably rescaled input vector. In the form $f(x)=(PD)x+b$, we apply a column-rescaling operation to the weight matrix before applying it to the unchanged input. 
The two views highlight different aspects of \method. 
The first view reflects the flexibility and high capacity of learning with an arbitrary parameter matrix, with only an intermediate rescaling operation to prevent the growth of 
the Lipschitz constant.
The second view shows that \method layers can be implemented without any overhead at prediction time, because the rescaling factors can be absorbed in the parameter matrix itself, even preserving potential structural properties such as sparsity patterns.

As a second insight from Lemma~\ref{lem:fullyconnected} we obtain how \method relates to prior methods that rely on orthogonal weight matrices. 
As derived after Theorem~\ref{thm}, if the parameter matrix, $P$, has orthogonal columns of full rank, then $W=PD$ is an orthogonal weight matrix. 
In particular, when $P$ is already an orthonormal matrix,
then $D$ will be the identity matrix,
and $W$ will be equal to $P$.
Therefore, our method can express any linear map 
based on an orthonormal matrix, 
but it can also express other linear maps.
If the columns of $P$ are approximately orthogonal, in the sense that $P^\top\!P$ is approximately diagonal, then the entries of $D$ are dominated by the diagonal entries of the product. The multiplication by $D$ acts mostly as a normalization of the length of the columns of $P$, and the resulting $W$ is an almost-orthogonal matrix.

Finally, observe that Lemma~\ref{lem:fullyconnected} 
does not put any specific numeric or structural constraints on the parameter matrix.
Consequently, there are no restrictions on the optimizer or objective function when training \method-networks. 

\subsection{Convolutional Lipschitz Layers}
An analog of Lemma~\ref{lem:fullyconnected} for 
convolutional layers can, in principle, be obtained by applying the same construction as above: convolutions are 
linear operations, so we could compute their Jacobian 
matrix and determine an appropriate rescaling matrix 
from it.
However, this naive approach would be inefficient, because it would require working with matrices that are of a size quadratic in the number of input dimensions and channels.
Instead, by a more refined analysis, we obtain the following result.

\begin{restatable}[Convolutional \method Layers]{lem}{convlemma}
\label{lem:convolutional}

Let $P\in\mathbb{R}^{k \times k \times  c_\text{I} \times c_\text{O}}$, be a convolution kernel matrix, where $k \times k$ is the \emph{kernel size} and $c_\text{I}$ and $c_\text{O}$ are the number of input and output channels, respectively. %
Then, the convolutional layer 
\begin{align}
f(x) = P * R(x) + b
\end{align}
is guaranteed to be 1-Lipschitz, 
where $R(x)$ is a channel-wise rescaling
that multiplies each channel $c\in\{1,\dots,c_I\}$ of the input by
\begin{align}
    d_c &= \Big( \sum_{i, j}\sum_{a=1}^{c_\text{I}} \Big| 
        \sum_{b = 1}^{c_\text{O}}P^{(a, b)} * P^{(c, b)} \Big|_{i, j} \Big)^{-1/2}.
\end{align}
We can equivalently write $f$ as $f(x) = W * x + b$,
where $W=P*D$ with $D \in \mathbb{R}^{1 \times 1 \times c_\text{I} \times c_\text{I}}$ 
given by $D_{1, 1}^{(c, c)} = d_c$,
and $ D_{1, 1}^{(c_1, c_2)} = 0$ for $c_1 \ne c_2$.
\end{restatable}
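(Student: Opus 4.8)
The plan is to reduce the convolutional case to Theorem~\ref{thm} applied to the (large, structured) Jacobian of the convolution, while exploiting translation invariance so that the generic per-column rescaling collapses to a single factor per input channel. Since a convolution $x \mapsto P * x$ is linear, its Jacobian $J$ is a doubly-block-Toeplitz matrix whose columns are indexed by pairs (input spatial position, input channel) and whose rows are indexed by pairs (output spatial position, output channel). The entry of $J$ linking output $(o,b)$ to input $(p,a)$ is the kernel coefficient $P^{(a,b)}$ at the spatial offset determined by $o$ and $p$. Theorem~\ref{thm} already tells us that rescaling the columns of $J$ by the diagonal matrix built from the absolute column sums of $J^\top J$ yields spectral norm at most $1$, so the whole task is to show that these column sums depend only on the input channel and that their value is exactly $d_c^{-2}$.

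First I would compute the entries of $M = J^\top J$. Writing out the inner product of two columns of $J$ and using the shift structure, the entry $M_{(p,a),(q,c)}$ equals $\sum_{b} P^{(a,b)} * P^{(c,b)}$ evaluated at a spatial offset determined by $p-q$ (the convolution-versus-correlation convention is immaterial, since only absolute values enter later). The crucial point is that this depends on the two spatial indices only through their difference. Consequently, the absolute column sum for the column indexed by $(q,c)$, namely $\sum_{(p,a)} |M_{(p,a),(q,c)}|$, can be reindexed by the offset $r = p-q$; the dependence on $q$ then disappears and what remains is precisely $\sum_a \sum_{i,j} | \sum_b P^{(a,b)} * P^{(c,b)} |_{i,j} = d_c^{-2}$. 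This is the calculation that converts the per-column rescaling of Theorem~\ref{thm} into the per-channel rescaling $R$ claimed in the statement.

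With this identification in hand, applying Theorem~\ref{thm} to $J$ with the channel-wise diagonal $\tilde D$ (whose entry on input $(q,c)$ is $d_c$) gives $\spec{J\tilde D} \le 1$. Since $\tilde D$ is exactly the Jacobian of the channel-wise rescaling $R$, the composite $J\tilde D$ is the Jacobian of $x \mapsto P * R(x)$, so this map---and hence $f$, which differs only by the additive constant $b$---is $1$-Lipschitz. The equivalent kernel form follows from associativity of convolution: since $R$ is itself a $1\times 1$ convolution with the diagonal kernel $D$, we have $P * R(x) = P * (D * x) = (P * D) * x = W * x$.

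I expect the main obstacle to be boundary effects. The reindexing step $r = p - q$ is clean only under translation invariance, \ie for circular padding, where the Jacobian is genuinely block-circulant. For zero padding or valid convolutions the true column sums near the boundary contain fewer terms and are therefore no larger than the translation-invariant value $d_c^{-2}$; one then checks that the proof of Theorem~\ref{thm} only requires the inequality $(\sum_j |M_{ij}|)\,D_{ii}^2 \le 1$ rather than equality, so that using the (possibly conservative) channel-wise factor $d_c$ still yields the bound. Verifying this robustness of the bound to the padding convention is the one place where care is needed; the rest is bookkeeping over the convolutional index structure.
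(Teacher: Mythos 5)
Your proposal is correct and follows essentially the same route as the paper's proof: write out the Jacobian $J$ of the convolution, identify $J^\top J$ with the channel-pairwise self-convolution $\sum_b P^{(a,b)} * P^{(c,b)}$ evaluated at the spatial offset, observe that the absolute column sums are (up to boundary effects) position-independent and equal $d_c^{-2}$, and invoke Theorem~\ref{thm}. Your treatment of the boundary---noting that column sums near the edge only shrink and that the proof of Theorem~\ref{thm} needs only the inequality $\big(\sum_j |M_{ij}|\big)D_{ii}^2 \le 1$ rather than equality---is a slightly more explicit rendering of what the paper does by proving the bound for maximally padded convolutions, taking $d_c$ as a uniform lower bound of the exact per-column factor (tight away from the boundary), and composing with a $1$-Lipschitz cropping to cover other padding types.
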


% \convlemma*

The proof consists of an explicit derivation of the Jacobian of the convolution operation as a linear map, followed by an application of Theorem 1. The main step is the explicit demonstration that the diagonal rescaling matrix can in fact be 
bounded by 
a per-channel multiplication with the result of a self-convolution of the convolution kernel. The details can be found in the \supp.

\paragraph{Discussion.}
We now discuss some favorable properties of  Lemma~\ref{lem:convolutional}.
First, as in the fully-connected case, the rescaling operation again can be viewed either as acting on the inputs, or as acting on the parameter matrix. Therefore, the convolutional layer also combines the properties of high capacity and no overhead at prediction time. In fact, for $1\times 1$ convolutions, the construction of Lemma~\ref{lem:convolutional} reduces to the fully-connected situation of Lemma~\ref{lem:fullyconnected}.

Second, computing the scaling factors is efficient, because the necessary operations scale only with the size of the convolution kernel regardless of the image size. 
The rescaling preserves the structure of the convolution kernel, \eg sparsity patterns. In particular, this means that constructs such as dilated % strided 
convolutions are automatically covered by Lemma~\ref{lem:convolutional} as well, as these can be expressed as ordinary convolutions with specific zero entries. 

Furthermore, Lemma~\ref{lem:convolutional} requires no strong assumption on the padding type, and works as long as the padding itself is 1-Lipschitz.
Also, the computation of the scale factors is easy to implement in all common deep learning frameworks using batch-convolution operations with the input channel dimension taking the role of the batch dimension.

\section{Experiments}
We compare our method to related work
in the context of
\emph{\cra}, %~\cite{}, 
where the goal is to solve an
image classification task in a way that provably 
prevents \emph{adversarial examples}~\cite{Szegedy_2014_ICLR}.
Specifically, we consider an input $x$ 
as \emph{certifiably robustly classified} by a model
under input perturbations up to size $\epsilon$,
if $x + \delta$ is correctly classified 
for all $\delta$ with $\| \delta \| \le \epsilon$.
Then the \emph{\cra} of a classifier is the
proportion of the test set that is certifiably robustly classified.

Consider a function $f$ that generates a score for each class.
Define the \emph{margin} of $f$ at input $x$ with correct label $y$
as 
\begin{align}
    M_f(x) &= \big[ f(x)_y - \max_{i \ne y} f(x)_i\big]_+ \qquad\text{with}\qquad 
[\cdot]_+ = \max\{\cdot,0\}.
\end{align}
Then the induced classifier, $C_f(x)=\operatorname{argmax}_{i} f(x)_i$, certifiably robustly classifies an input $x$
if $M_f(x) > \sqrt{2}L\epsilon$,
where $L$ is the Lipschitz constant $L$ of $f$.
This relation can be used to efficiently determine (a lower bound to) the \cra 
of Lipschitz networks~\cite{Tsuzuku_2018_NIPS}.

Following prior work in the field, we conduct experiments 
that evaluate the \cra for different thresholds, $\epsilon$, on the CIFAR-10 as well as the CIFAR-100 dataset.
We also provide ablation studies that illustrate that
\method can be used in a variety of network architectures,
and that it indeed learns matrices that are approximately 
orthogonal.
In the following we describe our experimental setup. Further details can be found in the \supp. 
All hyperparameters were determined on validation sets.

\myparagraph{Architecture:}
Our main model architecture is loosely inspired by the 
\textit{Conv\-Mixer} architecture~\cite{Trockman_2022_Arxiv}:
we first subdivide the input image into $4\times 4$ patches, 
which are processed by 14 convolutional layers, most
of kernel size $3\times 3$.
This is followed by 14 fully connected layers.
%
% Overall, the number of network parameter is approximate 18,700,000.
We report results for three different model sizes, we will refer to the models as 
\method-Small, \method-Medium and \method-Large.
We use the MaxMin activation 
function~\cite{Chernodub_2016_Arxiv,Anil_2019_ICML}.
% function~\cite{Anil_2019_ICML}.
%
The full architectural details can be found in Table  \ref{table:architecture_patchwise}.

Other network architectures are discussed in an ablation study in Section~\ref{sec:ablation}.

\newcommand{\pwconv}{\aolconv}
\newcommand{\pwfc}{\aolfc}

\setlength{\tabcolsep}{4pt}
\begin{table}[t]
\begin{center}
\caption{Patchwise architecture. 
For all layers we use zero padding to keep 
the size the same. 
For \method-Small we set $w$ to $16$,
and we choose $w=32$ and $w=48$ for \method-Medium and \method-Large.
Furthermore, $l$ is the number of classes, 
and $l=10$ for CIFAR-10 and $l=100$ for CIFAR-100.
\emph{Concatenation Pooling} stacks all the inputs into
a single vector, and
\emph{First channels} just selects the first channels and ignores the rest.
}
\label{table:architecture_patchwise}
\begin{tabular}{l|l|l|l|l|l}
\hline\noalign{\smallskip}
Layer name  & Kernel size & Stride & Activation & Output size & Amount \\
\noalign{\smallskip}
\hline
\noalign{\smallskip}

Concatenation Pooling & $4 \times 4$ & $4 \times 4$ & - & $8 \times 8 \times 48$ & $1$ \\
\pwconv & $1 \times 1$ & $1 \times 1$ & \mm & $8 \times 8 \times 192$ & $1 \atimes$ \\
\pwconv & $3 \times 3$ & $1 \times 1$ & \mm & $8 \times 8 \times 192$ & $12 \atimes$ \\
\pwconv & $1 \times 1$ & $1 \times 1$ & None & $8 \times 8 \times 192$ & $1 \atimes$ \\
First Channels & - & - & - & $8 \times 8 \times w$ & $1 \atimes$ \\
% \lnm & $8 \times 8$ & $8 \times 8$ & \mm & $1 \times 1 \times 64w$ & $1 \times$ \\
% \lnm & $1 \times 1$ & $1 \times 1$ & \mm & $1 \times 1 \times 64w$ & $12 \times$ \\
% \lnm & $1 \times 1$ & $1 \times 1$ & None & $1 \times 1 \times 64w$ & $1 \times$ \\
% First Channels & - & - & - & $1 \times 1 \times l $ & $1 \times$ \\
% Flatten & - & - & - & $l$ & $1 \times$ \\
Flatten & - & - & - & $64w$ & $1 \atimes$ \\
\pwfc & - & - & \mm & $64w$ & $13 \atimes$ \\
\pwfc & - & - & None & $64w$ & $1 \atimes$ \\
First Channels & - & - & - & $ l $ & $1 \atimes$ \\
%
% \begin{comment} % OLD VERSION:
% \lnm (Conv) & $4 \times 4$ & $4 \times 4$ & \mm & $8 \times 8 \times 192$ & $1 \atimes$ \\
% \lnm (Conv) & $3 \times 3$ & $1 \times 1$ & \mm & $8 \times 8 \times 192$ & $12 \atimes$ \\
% \lnm (Conv) & $1 \times 1$ & $1 \times 1$ & None & $8 \times 8 \times 192$ & $1 \atimes$ \\
% First Channels & - & - & - & $8 \times 8 \times w$ & $1 \atimes$ \\
% % \lnm & $8 \times 8$ & $8 \times 8$ & \mm & $1 \times 1 \times 64w$ & $1 \times$ \\
% % \lnm & $1 \times 1$ & $1 \times 1$ & \mm & $1 \times 1 \times 64w$ & $12 \times$ \\
% % \lnm & $1 \times 1$ & $1 \times 1$ & None & $1 \times 1 \times 64w$ & $1 \times$ \\
% % First Channels & - & - & - & $1 \times 1 \times l $ & $1 \times$ \\
% % Flatten & - & - & - & $l$ & $1 \times$ \\
% Flatten & - & - & - & $64w$ & $1 \atimes$ \\
% \lnm (FC) & - & - & \mm & $64w$ & $13 \atimes$ \\
% \lnm (FC) & - & - & None & $64w$ & $1 \atimes$ \\
% First Channels & - & - & - & $ l $ & $1 \atimes$ \\
% \end{comment}
%
\noalign{\smallskip}
\hline
\end{tabular}
\end{center}
\end{table}
\setlength{\tabcolsep}{1.4pt}

\myparagraph{Initialization:}
In order to ensure stable training
we initialize the parameter matrices so that our bound is tight. 
In particular, for layers preserving the size 
between input and output
(e.g. the $3 \times 3$ convolutions)
we initialize the parameter matrix so that the Jacobian is the identity matrix.
For any other layers we initialize the parameter matrix 
so that it has random orthogonal columns. 

\myparagraph{Loss function:}
In order to train the network to achieve good \cra{} we want the score
of the correct class to be bigger than any other score by a margin.
We use a loss function similar to the one proposed for \textit{Lipschitz-margin training} \cite{Tsuzuku_2018_NIPS}
with a temperature parameter that helps encouraging a margin during training.
Our loss function takes as input the model's logit vector, $\vec{s}$, as well as a one-hot encoding $\vec{y}$ of the true label as input, and is given by
\begin{align}
    % \mathcal{L}(\vec{s}, \vec{y}) = \operatorname{crossentropy}\Big( \vec{y}, \ 
    %         \operatorname{softmax}\big(\frac{\vec{s} - u\vec{y}}{t}\big) t \Big),
    % Typo in the previous version!
    \mathcal{L}(\vec{s}, \vec{y}) = \operatorname{crossentropy}\Big( \vec{y}, \ 
            \operatorname{softmax}\big(\frac{\vec{s} - u\vec{y}}{t}\big) \Big) t,
            \label{eq:loss}
\end{align}
for some offset $u$ and some temperature $t$.
For our experiments, we use $u=\sqrt{2}$, 
which encourages the model to learn to classify the training data 
certifiably robustly to perturbations of norm $1$.
Furthermore we use temperature $t=1/4$, which 
causes the gradient magnitude to stay close to $1$
as long as a training example is classified 
with margin less than $1/2$.

\myparagraph{Optimization:}
We minimize the loss function~\eqref{eq:loss} using SGD with Nesterov momentum of 0.9 for 1000 epochs. The batch size is 250.
The learning rate starts at $10^{-3}$ and is reduced by a factor of $10$ at epochs $900, 990$ and $999$.
As data augmentation we use spatial transformations (rotations and flipping) as well as some color 
transformation. The details are provided in the \supp. 
For all \method layers we also use weight decay with coefficient $5 \times 10^{-4}$.

%(Similar to layers that are using batch or weights normalization, AOL layers are not influenced by a rescaling of the weights. Therefore the weight decay mostly acts as adjusting the learning rate for the weights of different layers. The combination of weight normalization and weight decay has been shown to be useful,for example in \textit{MLP-Mixer}?)

\section{Results}
The main results can be found in Table \ref{table:cifar10_acc} and Table \ref{table:cifar100_acc},
where we compare the \cra of our method to those reported in previous works on orthogonal networks 
and other networks with bounded Lipschitz constant.
For methods that are presented in multiple variants, such as different networks depths, we include the variant for which the authors list results for large values of $\epsilon$.

\newcommand{\unpublished}[1]{\textit{#1}}

\begin{table}[t]
\setlength{\tabcolsep}{4pt}
\begin{center}
\caption{Experimental results: 
robust image classification on CIFAR-10 for \method and methods from the literature.
\tabledesc
Results for concurrent unpublished works (ECO and SOC with Householder activations) 
are printed in italics. \emph{Standard CNN} refers to our implementation of a simple 
convolutional network trained without enforcing any robustness, for details see the \supp.
}
\label{table:cifar10_acc}
\begin{tabular}{l|l|llll}
\hline\noalign{\smallskip}
\fulltableheader
\noalign{\smallskip}
\hline
\noalign{\smallskip}
Standard CNN        & 83.4\%  & 0\%      & 0\%       & 0\%       & 0\%  \\
\noalign{\smallskip}
\hline
\noalign{\smallskip}
% Values rounded;
BCOP Large \cite{Li_2019_NIPS}          & 72.2\%  & 58.3\%  & -       & -       & -  \\
GloRo 6C2F \cite{Leino_2021_ICML}       & 77.0\%  & 58.4\%  & -       & -       & -  \\
Cayley Large \cite{Trockman_2021_ICLR}  & 75.3\%  & 59.2\%  & -       & -       & -  \\
SOC-20 \cite{Singla_2021_ICML}          & 76.4\%  & 61.9\%  & -       & -       & -  \\
% SOC-25 (from~\cite{Yu_2022_ICLR})       & -       & 60.2\%  & 43.7\%  & 28.6\%  & -  \\
%
% \it{ECO-25 \cite{Yu_2022_ICLR}}     & \it{75.7\%}  & \it{66.1\%}  & \it{55.6\%}  & \it{45.3\%} & - \\
\it{ECO-30 \cite{Yu_2022_ICLR}} & \it{72.5\%} & \it{55.5\%} & -       & -       & -  \\
\it{SOC-15 + CR \cite{Singla_2022_ICLR}}  
                            & \it{76.4\%}  & \it{63.0\%}  & \it{48.5\%}  & \it{35.5\%}  & -  \\
\noalign{\smallskip}
\hline
\noalign{\smallskip}
% Final runs:
AOL-Small   & 69.8\% & 62.0\% & 54.4\% & 47.1\% & 21.8\% \\
AOL-Medium  & 71.1\% & 63.8\% & 56.1\% & 48.6\% & 23.2\% \\
AOL-Large   & 71.6\% & 64.0\% & 56.4\% & 49.0\% & 23.7\% \\
% AOL-XL      & 72.5\% & 65.1\% & 57.1\% & 49.8\% & 24.2\% \\
\noalign{\smallskip}
\hline
\noalign{\smallskip}
\end{tabular}
\end{center}
\setlength{\tabcolsep}{1.4pt}
\end{table}

\begin{table}[t]
\setlength{\tabcolsep}{4pt}
\begin{center}
\caption{Experimental results: 
robust image classification on CIFAR-100 for \method and methods from the literature.
\tabledesc
Results for concurrent unpublished works (ECO and SOC with Householder activations) 
are printed in italics.
}
\label{table:cifar100_acc}
\begin{tabular}{l|c|cccc}
\hline\noalign{\smallskip}
\fulltableheader
\noalign{\smallskip}
\hline
\noalign{\smallskip}
SOC-30 \cite{Singla_2021_ICML}  & 43.1\%  & 29.2\%  & -       & -      & - \\
% SOC (from~\cite{Yu_2022_ICLR})  & -       & 28.6\%  & 18.2\%  & 10.9\% & - \\
%
% \it{ECO-25 \cite{Yu_2022_ICLR}} & \it{41.7\%}  & \it{32.6\%}  & \it{25.1\%}  & \it{19.2\%}  & - \\
\it{ECO-30 \cite{Yu_2022_ICLR}} & \it{40.0\%} & \it{25.4\%} & -       & -       & -  \\
\it{SOC+CR \cite{Singla_2022_ICLR}}   
                                & \it{47.8\%}  & \it{34.8\%}  & \it{23.7\%}  & \it{15.8\%}  & - \\
\noalign{\smallskip}
\hline
\noalign{\smallskip}
% FINAL RESULTS:
AOL-Small   & 42.4\% & 32.5\% & 24.8\% & 19.2\% & 6.7\% \\
AOL-Medium  & 43.2\% & 33.7\% & 26.0\% & 20.2\% & 7.2\% \\
AOL-Large   & 43.7\% & 33.7\% & 26.3\% & 20.7\% & 7.8\% \\
\noalign{\smallskip}
\hline
\noalign{\smallskip}
\end{tabular}
\end{center}
\setlength{\tabcolsep}{1.4pt}
\end{table}

The table shows that our proposed method achieves 
results comparable with the current state-of-the-art. 
%
% For small robustness thresholds, it achieves 
% \cra on par with published earlier methods, 
% though slightly below that reported in 
% two concurrent preprints~\cite{Singla_2022_ICLR,Yu_2022_ICLR}.
For small robustness thresholds, it achieves 
\cra slightly higher than published earlier methods, 
and compareable to the one reported in 
a concurrent preprint~\cite{Singla_2022_ICLR}.
Focusing on (more realistic) medium or higher 
robustness thresholds, \method achieves \cra 
comparable to or even higher than all other 
methods.
As a reference for future work, we also report 
values for an even higher robustness threshold 
than what appeared in the literature so far, $\epsilon=1$. 

Another observation is that on the CIFAR-10 dataset
the clean accuracy of \method is somewhat below 
other methods. We attribute this to the fact that 
we mainly focused our training towards high robustness. %
The accuracy-robustness trade-off can in fact 
be influenced by the choice of margin at 
training time, see our ablation study in Section~\ref{sec:ablation}. 

\subsection{Ablation Studies}\label{sec:ablation}
In this section we report on a number of 
ablation studies that shed light on specific
aspect of \method.

\myparagraph{Generality:}
One of the main advantages of \method is that it
is not restricted to a specific architecture 
or a specific layer type.
To demonstrate this, we present additional 
experiments for a broad range of other architectures.
\method-FC consists simply of $9$ fully connected layers.
\method-STD resembles a standard convolutional 
architecture, where the number of channels doubles whenever the resolution is reduced.
\method-ALT is another convolutional architecture that keeps the number of activations
constant wherever possible in the network.
\method-DIL resembles the architectures used in~\cite{Yu_2022_ICLR} in that it uses large dilated convolutions instead of small local ones. It also uses circular padding.
The details of the architectures are provided in the \supp. 

The results (shown in the \supp)
confirm that for any of these architectures, we can
train \method-based Lipschitz networks and achieve 
\cra comparable to the results of earlier specialized methods.

\myparagraph{Approximate Orthogonality:}
As a second ablation study, we demonstrate that 
\method indeed learns almost-orthogonal weight 
matrices, thereby justifying its name.
In order to do that, we evaluate $J^\top\!J$ for $J$ the Jacobian of an \method convolution,
and visualize it in Figure \ref{fig:jacobian}. 
More detailed results including a comparison 
to standard training are provided in the \supp.

\begin{figure}[t]
\centering
\includegraphics[width=\textwidth]{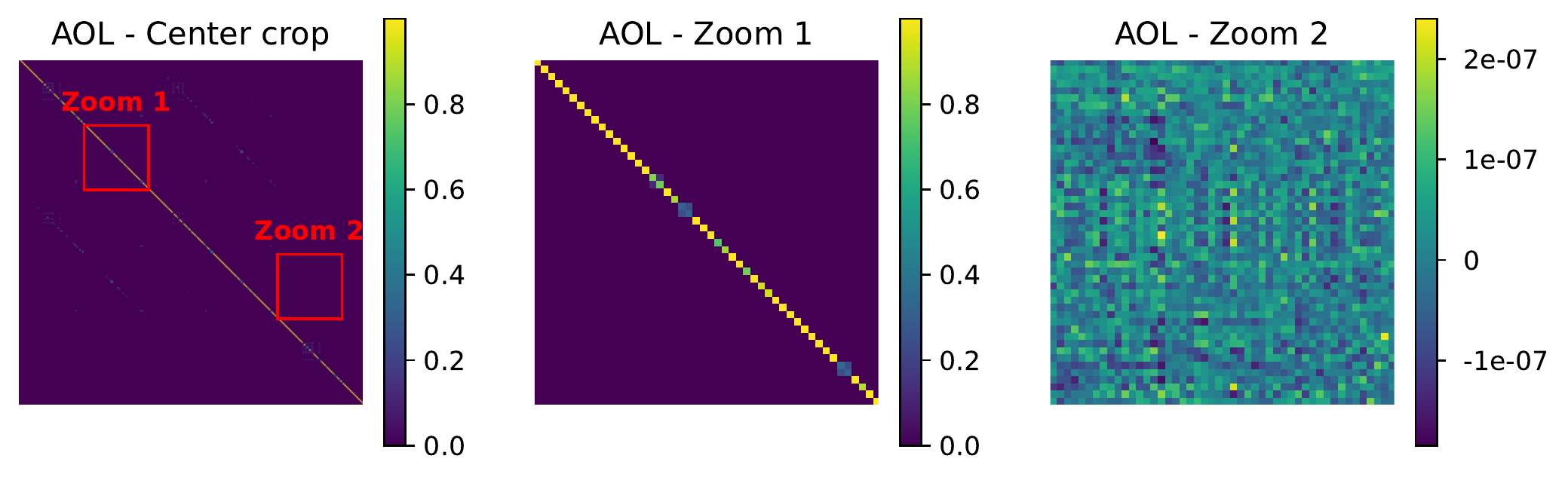}
\caption{Evaluation of the orthogonality of the trained model.
We consider the third layer of the \method-Small model.
It is a $3 \times 3$ convolutions with input size $8 \times 8 \times 192$.
We show a center crop of $J^\top J$, for $J$ the Jacobian, as well as two further crops.
Note that most diagonal elements are close to $1$,
and most off-diagonal elements are very close to $0$
(note the different color scale in the third subplot).
This confirms that \method did indeed learn an almost-orthogonal weight matrix.
Best viewed in color and zoomed in.
}
\label{fig:jacobian}
\end{figure}

\setlength{\tabcolsep}{4pt}
\begin{table}
\begin{center}
\caption{Experimental results for \method-Small 
for different value of
$u$ and $t$ in the loss function
in Equation~\eqref{eq:loss}.
\tabledesc
}
\label{table:acc_rob_tradeoff}
\begin{tabular}{c|c|c|cccc}
%\hline
\noalign{\smallskip}
\hline
\noalign{\smallskip}
$u$  & $t$  & Standard  & \multicolumn{4}{c}{\Cra} \\
     &      & Accuracy  & $\epsilon=\frac{36}{255}$ & $\epsilon=\frac{72}{255}$ & $\epsilon=\frac{108}{255}$ & $\epsilon=1$ \\
\noalign{\smallskip}
\hline
\noalign{\smallskip}
% 
% FINAL RESULTS:
$\sqrt{2}/16$ & $ 1/64 $ & 79.8\% & 45.3\% & 16.7\% &  3.3\% &  0.0\% \\
$\sqrt{2}/4$  & $ 1/16 $ & 77.4\% & 63.0\% & 47.6\% & 33.0\% &  2.5\% \\
$\sqrt{2}$    & $ 1/4  $ & 70.4\% & 62.6\% & 55.0\% & 47.9\% & 22.2\% \\
$4 \sqrt{2}$  & $ 1    $ & 59.8\% & 55.5\% & 50.9\% & 46.5\% & 30.8\% \\
$16 \sqrt{2}$ & $ 4    $ & 48.2\% & 45.2\% & 42.2\% & 39.4\% & 28.6\% \\
\noalign{\smallskip}
\hline
\noalign{\smallskip}
\end{tabular}
\end{center}
\end{table}
\setlength{\tabcolsep}{1.4pt}

\myparagraph{Accuracy-Robustness Tradeoff:}
The loss function in Equation~\eqref{eq:loss} allows 
trading off between clean accuracy and \cra by changing 
the size of the enforced margin. 
We demonstrate this by an ablation study that 
varies the offset parameter $u$ in the loss function,
and also scales $t$ proportional to $u$.

The results can be found in Table~\ref{table:acc_rob_tradeoff}.
One can see that using a small margin allows us to train
an \method Network with high clean accuracy, but decreases the \cra for larger input perturbations,
whereas choosing a higher offset allows us to reach state-of-the-art accuracy
for larger input perturbations.
Therefore, varying this offset gives us an easy way to
prioritize the measure that is important for a specific problem.

\begin{comment}
\section{Limitations}
Despite its flexibility, \method also has limitations. 
Some of those need to be overcome in order
to enable training on high-resolution datasets in the future.

% Firstly, while \method in principle can handle skip connections, 
% as they are used \eg in ResNets, 
% it is not able to make them orthogonal 
% due to their lack of learnable weights. 
% %
% Consequently, the bound on the Lipschitz constant will not be tight   % ~\eqref{thm} 
% and the network will lose dynamic range. We only recommend using 
% skip connections for problems where that is acceptable.
% 
Firstly, while \method in principle 
can handle skip connections, 
as they are used \eg in ResNets, 
the bound will (generally) not be tight,
and the network will lose dynamic range. 
We only recommend using skip connections for problems where that is acceptable.

Secondly, optimization seems to be harder for AOL layers. 
They take longer to converge than unconstrained layers, 
which manifests itself in more training epochs needed. 
%
We believe the reason for this is that the optimizer needs to keep
the matrices approximately orthogonal in addition to fitting the
training data, and that doing both takes more iterations.
%
Also, local minima emerge, avoiding which needs a careful choice of initialization and learning rate. % scheduling.
%
% For future work, we plan to explore mitigation strategies, such as combining explicit and implicit % orthogonalization. Hopefully this will allow scaling to truly large-scale problems. 

Thirdly, AOL is designed for $L^2$-Lipschitzness. 
Consequen\-ty, the robustness certificates also hold only for $L^2$-perturba\-tions, and we can only give very weak guarantees
for example for perturbations with bounded $L^1$ norm.

Finally, the rescaling factors are simple to compute using a matrix multiplication or a convolution. 
% 
However, the complexity of this calculation grows with the number of 
channels of the input. 
A more detailed analysis of the computational complexity 
can be found in the \supp.
\end{comment}

\section{Conclusion}
In this work, we proposed \method, a method for 
constructing deep networks that have Lipschitz 
constant of at most $1$ and therefore 
are robust against small changes in the input data.
Our main contribution is a rescaling technique
for network layers that ensures them to be  
1-Lipschitz.
It can be computed and trained efficiently,
and is applicable to fully-connected and various types of convolutional layers.
Training with the rescaled layers leads to 
weight matrices that are almost orthogonal 
without the need for a special parametrization 
and computationally costly orthogonalization 
schemes. 
We present experiments and ablation studies in the context
of image classification with certified robustness.
They show that \method-networks achieve results 
comparable with methods that explicitly enforce 
orthogonalization, while offering the simplicity
and flexibility of earlier bound-based approaches.

% In future work, we plan to explore the potential 
% of \method-inspired rescaling methods 
% to also enable non-linear network layers, 
% such as attention layers in transformer architectures.

%Bibliography
\bibliographystyle{unsrt}
\bibliography{references}  

\clearpage
\appendix

\newcommand{\cI}{c_\text{I}}
\newcommand{\cO}{c_\text{O}}

\section{Appendix}

\subsection{Proof of Lemma \ref{lem:convolutional}}
We will proof the Lemma \ref{lem:convolutional} here. Recall

\convlemma*

\begin{proof}

For the proof we will assume \emph{maximal} padding of the input: 
We pad an input $x \in \mathbb{R}^{n \times n \times c_\text{I}}$ (with values independent of $x$)
to a size of $(n + 2k-2) \times (n + 2k-2) \times \cI$,
and then apply the convolution to the padded input.
Then we obtain an output of size $(n+k-1) \times (n+k-1) \times \cO$.
We will derive a rescaling of the input that ensures 
this convolution has a Lipschitz constant of $1$.
Then, any convolution with a different kind of padding 
(such as \emph{same} size or \emph{valid})
can be considered as first doing a maximally padded convolution,
followed by a center cropping operation. 
Since a cropping operation also has a Lipschitz constant of $1$, 
this also shows that convolutional layers 
with a different kind of padding
have a Lipschitz constant of $1$.

Denote by $\tilde{x}$ the padded version of input $x$,
with $\tilde{x}_{i+k-1, j+k-1} = x_{i, j}$.
Then, the multi-channel, maximally padded convolution 
with a convolutional kernel
$P \in\mathbb{R}^{k \times k \times c_I \times c_O}$ is given by

\begin{align}
    [P * x]^b_{i,j} = \sum_{p=0}^{k-1} \sum_{q=0}^{k-1} \sum_{a=1}^{c_I} 
        P^{(a,b)}_{p, q} \tilde{x}^a_{i+p,j+q}\label{eq:convolutiondef2},
\end{align}
for $1 \le i, j \le n+k-1$ and $1 \le b \le \cO$.

We now consider the Jacobian $J$ of the linear map 
(from unpadded input to output)
defined by Equation \ref{eq:convolutiondef2}.
It is a matrix of size 
$(n+k-1)^2\cO \times n^2\cI$, with entries given by
\begin{align}
    J^{(b,a)}_{(i_2,j_2),(i_1,j_1)} 
    % = P^{(a,b)}_{(i_1-i_2),(j_1-j_2)},
    = P^{(a,b)}_{(i_1-i_2+k-1),(j_1-j_2+k-1)},
\end{align}
for $1 \le i_1, i_2 \le n$, 
$1 \le i_2, j_2 \le n+k-1$, 
$1 \le a \le \cI$ and 
$1 \le b \le \cO$.
Here, we define $P^{(a,b)}_{p, q} = 0$ 
unless $0 \le p < n$ and $0 \le q < n$.

We can use that to obtain an expression for $J^\top J$ (writing $m=n+k-1$):
\begin{align}
&[J^\top J]^{(a_1, a_2)}_{(i_1,j_1),(i_2,j_2)} \\
    &\qquad= \sum_{i=1}^{m} \sum_{j=1}^{m} \sum_{b=1}^{c_\text{O}}
        J^{(b, a_1)}_{(i,j), (i_1,j_1)}J^{(b, a_2)}_{(i,j), (i_2,j_2)}
    \\
    &\qquad= \sum_{i=1}^{m} \sum_{j=1}^{m} \sum_{b=1}^{c_\text{O}}
        P^{(a_1, b)}_{(i_1-i+k-1),(j_1-j+k-1)} P^{(a_2, b)}_{(i_2-i+k-1),(j_2-j+k-1)}
    \\
    &\qquad= \sum_{i=0}^{k-1} \sum_{j=0}^{k-1} \sum_{b=1}^{c_\text{O}}
        P^{(a_1, b)}_{i, j} P^{(a_2, b)}_{(i+i_2-i_1),(j+j_2-j_1)}
    \\
    &\qquad= \left( \sum_{b=1}^{c_\text{O}} P^{(a_1, b)} * P^{(a_2, b)} \right)_{i_2 - i_1, j_2 - j_1}.
    \label{eq:conv}
\end{align}

We can now apply Theorem~\ref{thm} (from the main paper) with $P=J$ together with 
Equation \eqref{eq:conv} in order to obtain the necessary rescaling: 
In order to guarantee the convolution 
to have Lipschitz constant $1$,
we need to multiply input $x^{(c)}_{i_2, j_2}$ by
\begin{align}
    &\left( \sum_{i_1=1}^n \sum_{j_1=1}^n \sum_{a_1=1}^{\cI} 
        \left| \sum_{b=1}^{c_\text{O}} P^{(a_1, b)} * P^{(c, b)} \right|_{i_2 - i_1, j_2 - j_1}
        \right)^{-1/2}.
        % \right)^{1/2}.
\end{align}
A lower bound of this expression 
% Using an upper bound of the expression in the brackets
(that is tight for most values of $i_2$ and $j_2$)
is given by
\begin{align} \label{eq:final}
    &\left( \sum_{i=-k+1}^{k} \sum_{j=-k+1}^{k} \sum_{a=1}^{\cI} 
    % &\left( \sum_{\substack{i= \\ -k+1}}^{k} \sum_{\substack{j= \\ -k+1}}^{k} \sum_{a=1}^{\cI} 
        \left| \sum_{b=1}^{c_\text{O}} P^{(a, b)} * P^{(c, b)} \right|_{i, j}
        \right)^{-1/2}.
        % \right)^{1/2}.
\end{align}
This value is independent of both $i_2$ and $j_2$, 
so this completes our proof for maximally padded convolutions. 
This also implies that convolutions
with less padding have a Lipschitz constant of $1$ 
when the input is rescaled as described.
\qed
\end{proof}

Note that this proof requires padding independent of the input.
However, for example for cyclic padding, 
the Jacobian is a doubly circuland matrix,
and a very similar proof shows that our rescaling still works.

Also note that a result similar to equation \eqref{eq:conv},
relating $J^\top J$ to a self-convolution,
has been observed before by \cite{Wang_2020_CVPR}.

\subsection{Comparison of the orthogonality}
We will present a comparison of $J^\top J$ for $J$ the Jacobian of different layers.
We consider three architectures: 
A standard convolutional architecture with standard convolutions (see Section \ref{std}),
the same architectures with \method convolutions 
and (theoretically) an architecture with perfectly orthogonal Jacobian.
We pick the third layer of this architecture.
It is a convolution with kernel of size $3 \times 3 \times 32 \times 32$,
and input as well as output of size $32 \times 32 \times 32$.
This results in a Jacobian $J$ of size $32\,768 \times 32\,768$,
and we calculate and visualize % (the relevant values of)
the values of a center crop of size $288 \times 288$ of the matrix $J^\top J$.
(See Figure \ref{fig:jacobian_compared}.)

\begin{figure}[t]
\centering
\includegraphics[width=.8\textwidth]{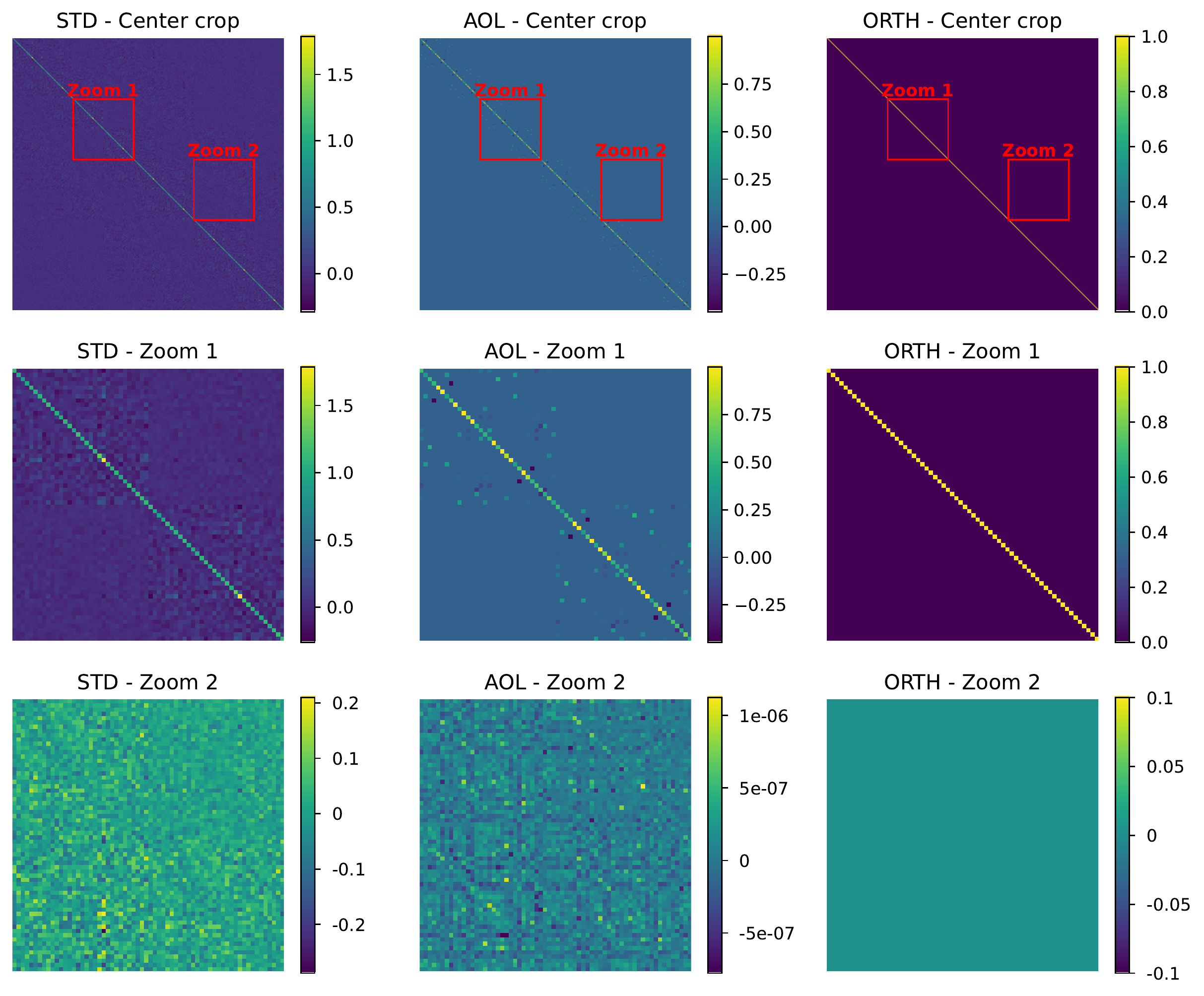}
\caption{Evaluation of orthogonality of trained models.
The first row shows a center crop of $J^\top J$, where $J$ is the Jacobian of the layer,
as well as the location of the other two crops.
Those are shown in the second and third row.
\textbf{Left column:} Standard convolutional layer (STD).
\textbf{Center column:} \method-STD (AOL),
\textbf{Right column:} (Perfectly) orthogonal layer (ORTH).
Note the different color scales of different subplots.
Best viewed in color and zoomed in.
}
\label{fig:jacobian_compared}
\end{figure}

One can see that for our \method-STD architecture most off-diagonal elements
are very close to $0$, whereas for the standard architecture they are not.
Interestingly, for our architecture there are some off-diagonal elements
that are clearly non-zero. 
This shows that the learning % step 
resulted in a few column pairs
not being orthogonal, in line with our claim of almost-orthogonality.

\subsection{Accuracies of different architectures}
The results for different architectures using our
proposed AOL layers are in Table~\ref{table:ablation_accuracies}.

\setlength{\tabcolsep}{4pt}
\begin{table}[h]
\begin{center}
\caption{Experimental results for different architectures 
using \method on CIFAR-10.
\tabledesc
}
\label{table:ablation_accuracies}
\begin{tabular}{l|l|llll}
\hline\noalign{\smallskip}
\fulltableheader
\noalign{\smallskip}
\hline
\noalign{\smallskip}
AOL-FC      & 67.9\%  & 59.1\%  & 51.1\%  & 43.1\%  & 17.6\%  \\
AOL-STD     & 65.0\%  & 56.4\%  & 48.2\%  & 39.9\%  & 15.8\%  \\
AOL-ALT     & 68.4\%  & 60.3\%  & 52.4\%  & 44.8\%  & 19.9\%  \\
AOL-DIL     & 62.7\%  & 54.2\%  & 46.0\%  & 38.3\%  & 14.9\%  \\
%AOL        & 69.1\%  & 61.4\%  & 53.4\%  & 46.4\%  & 21.1\%  \\
\noalign{\smallskip}
\hline
\noalign{\smallskip}
\end{tabular}
\end{center}
\end{table}
\setlength{\tabcolsep}{1.4pt}

\subsection{Further Architectures} \label{Architectures}
We present the other architectures here that were used in the ablation studies.

\subsubsection{Standard CNN:} \label{std}
In this Architecture the number of channels doubles whenever the spatial size is decreased.
For details see Table \ref{table:architecture_std_conv}.

\newcommand{\stdconv}{Conv}

\setlength{\tabcolsep}{4pt}
\begin{table}
\begin{center}
\caption{A relatively standard convolutional architecture 
for CIFAR-10. 
For all layers we use zero padding to keep the size the same.
\emph{First channels} just selects the first channels and ignores the rest.
}
\label{table:architecture_std_conv}
\begin{tabular}{l|l|l|l|l|l|l}
\hline\noalign{\smallskip}
Layer name  & Filters & Kernel size & Stride & Activation & Output size & Amount \\
\noalign{\smallskip}
\hline
\noalign{\smallskip}
\stdconv & $32$ & $3 \times 3$ & $1 \times 1$ & \mm & $32 \times 32 \times 32$ & $4 \atimes$ \\
\stdconv & $64$ & $2 \times 2$ & $2 \times 2$ & None & $16 \times 16 \times 64$ & $1 \atimes$ \\
\stdconv & $64$ & $3 \times 3$ & $1 \times 1$ & \mm & $16 \times 16 \times 64$ & $4 \atimes$ \\
\stdconv & $128$ & $2 \times 2$ & $2 \times 2$ & None & $8 \times 8 \times 128$ & $1 \atimes$ \\
\stdconv & $128$ & $3 \times 3$ & $1 \times 1$ & \mm & $8 \times 8 \times 128$ & $4 \atimes$ \\
\stdconv & $256$ & $2 \times 2$ & $2 \times 2$ & None & $4 \times 4 \times 256$ & $1 \atimes$ \\
\stdconv & $256$ & $3 \times 3$ & $1 \times 1$ & \mm & $4 \times 4 \times 256$ & $4 \atimes$ \\
\stdconv & $512$ & $2 \times 2$ & $2 \times 2$ & None & $2 \times 2 \times 512$ & $1 \atimes$ \\
\stdconv & $512$ & $3 \times 3$ & $1 \times 1$ & \mm & $2 \times 2 \times 512$ & $4 \atimes$ \\
\stdconv & $1024$ & $2 \times 2$ & $2 \times 2$ & None & $1 \times 1 \times 1024$ & $1 \atimes$ \\
\stdconv & $1024$ & $1 \times 1$ & $1 \times 1$ & None & $1 \times 1 \times 1024$ & $1 \atimes$ \\
First Channels & - & - & - & - & $1 \times 1 \times 10$ & $1 \atimes$ \\
Flatten & - & - & - & - & $10$ & $1 \atimes$ \\
\hline
\end{tabular}
\end{center}
\end{table}
\setlength{\tabcolsep}{1.4pt}

\subsubsection{\method-FC:} \label{fcc}
Architecture consisting of $9$ fully connected layers.
For details see Table \ref{table:architecture_fully_fully_connected}.

\newcommand{\ffclnm}{\aolfc}

\setlength{\tabcolsep}{4pt}
\begin{table}
\begin{center}
\caption{Fully Connected Architecture. 
\emph{First channels} just selects the first channels 
and ignores the rest.
}
\label{table:architecture_fully_fully_connected}
\begin{tabular}{l|l|l|l}
\hline\noalign{\smallskip}
Layer name & Activation & Output size & Amount \\
\noalign{\smallskip}
\hline
\noalign{\smallskip}
Flatten & - & 3072 & $1 \atimes$ \\
\ffclnm & \mm & 4096 & $8 \atimes$ \\
\ffclnm & None & 4096 & $1 \atimes$ \\
First Channels & - & $10$ & $1 \atimes$ \\
\hline
\end{tabular}
\end{center}
\end{table}
\setlength{\tabcolsep}{1.4pt}

\subsubsection{\method-STD:} \label{std_conv}
This architecture is identical to the one in Table \ref{table:architecture_std_conv},
only with standard convolutions replaced by \method convolutions.

\subsubsection{\method-ALT:} \label{conv}
Architecture that quadruples the number of channels whenever the spatial size is decreased
in order to keep the number of activations constant for the first few layers.
This is done until there are $1024$ channels, then the number of channels is kept at this value.
For details see Table \ref{table:architecture_convolutional}.

\newcommand{\lnmalt}{AOL Conv}

\setlength{\tabcolsep}{4pt}
\begin{table}
\begin{center}
\caption{Convolutional architecture. For all layers we use zero padding to keep the size the same. 
\emph{First channels} just selects the first channels and ignores the rest.
}
\label{table:architecture_convolutional}
\begin{tabular}{l|l|l|l|l|l|l}
\hline\noalign{\smallskip}
Layer name  & Filters & Kernel size & Stride & Activation & Output size & Amount \\
\noalign{\smallskip}
\hline
\noalign{\smallskip}
\lnmalt & $16$   & $2 \times 2$ & $2 \times 2$ & \mm & $16 \times 16 \times 16$  & $1 \atimes$ \\
\lnmalt & $16$   & $3 \times 3$ & $1 \times 1$ & \mm & $16 \times 16 \times 16$  & $4 \atimes$ \\
\lnmalt & $64$   & $2 \times 2$ & $2 \times 2$ & \mm & $8 \times 8 \times 64$    & $1 \atimes$ \\
\lnmalt & $64$   & $3 \times 3$ & $1 \times 1$ & \mm & $8 \times 8 \times 64$    & $4 \atimes$ \\
\lnmalt & $256$  & $2 \times 2$ & $2 \times 2$ & \mm & $4 \times 4 \times 256$   & $1 \atimes$ \\
\lnmalt & $256$  & $3 \times 3$ & $1 \times 1$ & \mm & $4 \times 4 \times 256$   & $4 \atimes$ \\
\lnmalt & $1024$ & $2 \times 2$ & $2 \times 2$ & \mm & $2 \times 2 \times 1024$  & $1 \atimes$ \\
\lnmalt & $1024$ & $1 \times 1$ & $1 \times 1$ & \mm & $2 \times 2 \times 1024$  & $4 \atimes$ \\
\lnmalt & $1024$ & $1 \times 1$ & $1 \times 1$ & None & $2 \times 2 \times 1024$ & $1 \atimes$ \\
First Channels & $256$ & -     & -            & -   & $2 \times 2 \times 256$    & $1 \atimes$ \\
\lnmalt & $1024$ & $2 \times 2$ & $2 \times 2$ & \mm & $1 \times 1 \times 1024$  & $1 \atimes$ \\
\lnmalt & $1024$ & $1 \times 1$ & $1 \times 1$ & \mm & $1 \times 1 \times 1024$  & $4 \atimes$ \\
\lnmalt & $1024$ & $1 \times 1$ & $1 \times 1$ & None & $1 \times 1 \times 1024$ & $1 \atimes$ \\
First Channels & $10$ & - & - & - & $1 \times 1 \times 10$ & $1 \atimes$ \\
Flatten & - & - & - & - & $10$ & $1 \atimes$ \\
\hline
\end{tabular}
\end{center}
\end{table}
\setlength{\tabcolsep}{1.4pt}

\subsubsection{\method-DIL} \label{dilated}
We use an architecture similar to the one used by ECO.
For that, we just replace each $3 \times 3$ convolution in the architecture
in Table \ref{table:architecture_std_conv} with a strided \method convolution.

\subsection{Data augmentation}
We use data augmentation in all our experiments.
We first do some color augmentation of the images, followed by some spatial transformations.
For the color augmentation, 
we first adjust the hue of the image (by a random factor with delta in $[-0.02, 0.02]$),
then we adjust the saturation of the image (by a factor in $[.3, 2]$),
then we adjust the brightness of the image (by a random factor with delta in $[-0.1, 0.1]$),
and finally we adjust the contrast of the image (by a factor in $[.5, 2]$).
After this we clip the pixel values so they are in $[0., 1.]$.
For the spatial transformations, we first apply a random rotation, with a maximal rotation of $5$ degrees.
Then we apply a random shift of up to $10\%$ of the image size,
and finally we flip the image with a probability of $50\%$.
We rely on the tensorflow layers \emph{RandomRotation} and \emph{RandomTranslation} for the spatial transformations,
and leave all hyperparameters such as the fill mode as the default values.
All hyperparameters specifying the amount of augmentation were chosen based on visual inspection of the
augmented training images.

\end{document}